\documentclass[letterpaper, 10 pt, conference]{ieeeconf}  % Comment this line out if you need a4paper

\IEEEoverridecommandlockouts                              % This command is only needed if 
\usepackage{graphics} % for pdf, bitmapped graphics files
\usepackage{xcolor}
\usepackage{graphicx} % for pdf, bitmapped graphics files
\usepackage{mathptmx} % assumes new font selection scheme installed
\usepackage{times} % assumes new font selection scheme installed
\usepackage{amsmath} % assumes amsmath package installed
\usepackage{amssymb}  % assumes amsmath package installed
\usepackage{comment}
\usepackage{url}

\usepackage{amsthm}

\newtheorem{prbm}{Problem}
\newtheorem{thm}{\protect\theoremname}

\theoremstyle{definition}
\newtheorem{defn}{\protect\definitionname}

\theoremstyle{remark}
\newtheorem{claim}{\protect\claimname}
\newtheorem{rem}{\protect\remarkname}

\usepackage{babel}
\providecommand{\claimname}{Claim}
\providecommand{\definitionname}{Definition}
\providecommand{\remarkname}{Remark}
\providecommand{\theoremname}{Theorem}

\title{\LARGE \bf
Spiking Neuromorphic Control for Periodic Orbits Stabilization in Underactuated Robots: A Greedy Lyapunov-Based Approach
}

\author{Surov M.$^{1}$ and Sumenkov O.$^{1}$% <-this % stops a space
\thanks{$^{1}$The authors are with Mathematical Robotics Science Division, Sirius University of Science and Technology, Olympiyskiy Av. 1, Sirius, 354340 Russia
        {\tt\small \{surov.mo, sumenkov.oy\}@talantiuspeh.ru}.}%
}

\begin{document}

\maketitle
\thispagestyle{empty}
\pagestyle{empty}

%%%%%%%%%%%%%%%%%%%%%%%%%%%%%%%%%%%%%%%%%%%%%%%%%%%%%%%%%%%%%%%%%%%%%%%%%%%%%%%%
\begin{abstract}
    This work investigates the use of event-based spiking controllers, implemented as spiking neural networks, for trajectory tracking in underactuated robots. Specifically, we address the problem of stabilizing a prescribed periodic orbit. It is well known that, by applying the method of transverse linearization to the robot dynamics in a neighborhood of a periodic trajectory, the dynamics can be approximately described by a linear time-varying system with periodic coefficients (a periodic LTV system). The problem of orbital stabilization can therefore be reduced to stabilizing the trivial solution of this LTV system. We show that a previously proposed Lyapunov-based greedy design, originally developed for LTI systems, can also be extended to the corresponding linearized transverse dynamics, which take the form of a periodic LTV system. This enables the construction of a feedback controller for orbital stabilization of the robot. The effectiveness of the proposed method is demonstrated on the trajectory stabilization problem for the butterfly robot.
\end{abstract}

%%%%%%%%%%%%%%%%%%%%%%%%%%%%%%%%%%%%%%%%%%%%%%%%%%%%%%%%%%%%%%%%%%%%%%%%%%%%%%%%
\section{INTRODUCTION}
    
    Neuromorphic control is an emerging paradigm in control systems engineering, inspired by the event-driven information processing of the human brain. Although the underlying principles of spike-based computation and neuromorphic engineering date back to the 1990s~\cite{astrom_spike,mead1990}, their application to control systems has gained renewed attention in recent years, with successful experimental demonstrations reported, for example, in~\cite{gallego2020event,krauhausen2021}. Thus, spike-based control shows potential for energy-efficient, robust, and computationally simple control, motivating further investigation of its use in control systems.

    Despite this progress, a significant gap remains between practical implementations and the theoretical justification of neuromorphic control systems. Several studies in neuromorphic control~\cite{goebel2009,singh2018regulation,sepulchre2022,petri2025cdc,medvedeva2025cdc,agliati2026spiking,petri2026nahs,heemels2026ejc}, among others, have addressed this gap by developing stability analysis tools for event-driven systems.

    In particular, for LTI systems, several approaches have recently been proposed. The \emph{emulation-based} approach~\cite{petri2025cdc,petri2025emulation,petri2026nahs} uses integrate-and-fire neurons with fixed-amplitude spikes and provides practical stability guarantees for the resulting closed-loop system. An alternative, referred to as the \emph{greedy Lyapunov-based} approach~\cite{agliati2026spiking,heemels2026ejc}, generates neuron spikes according to a local minimization principle, enabling Lyapunov-based stability analysis of the resulting hybrid closed-loop dynamics.

    Building on recent results for LTI systems~\cite{heemels2026ejc}, we extend spiking neuromorphic control, in which the control input is represented as a sum of constant-amplitude spikes, to the stabilization of periodic orbits in underactuated robots. This control problem arises in many robotic and engineering applications, including walking bipeds~\cite{grizzle2001}, pendulum-like systems~\cite{jahn2021}, spacecraft~\cite{spacecraft2026}, and others. Several approaches have been developed to address this problem, including transverse linearization~\cite{banaszuk1995}, energy shaping~\cite{ortega_energy_shaping}, eigenmanifold stabilization~\cite{della2020}, and many others. The underactuated nature of these systems, with fewer control signals than degrees of freedom, makes the stabilization problem challenging. Moreover, when the control signals are restricted to sums of constant-amplitude spikes, it is not immediately clear whether the tracking problem admits a solution or what stabilization properties can be achieved with this type of controller in principle.

    This paper demonstrates that a \emph{spiking neuromorphic controller} emitting fixed-amplitude pulses can achieve \emph{practical stabilization} of a reference periodic orbit. We show that the dynamics of a robot with spiking inputs can be approximated by a hybrid system expressed in local transverse coordinates near the target orbit, reducing the original problem to the stabilization of an auxiliary hybrid LTV system. To stabilize this auxiliary system, we employ the greedy Lyapunov-based approach originally developed for LTI systems~\cite{heemels2026ejc}. Building on this framework, we provide a constructive design procedure for spiking neuromorphic controllers and establish practical stability of the reference orbit. The approach is experimentally validated on a butterfly robot testbed.

    The paper is organized as follows. Section II presents the problem statement and preliminaries on orbital stability and transverse linearization. Section III presents the main result on greedy Lyapunov-based spiking neuromorphic control design. Section IV presents experimental results on the stabilization of a periodic motion of a butterfly robot using a spiking neuromorphic controller. Finally, Section V concludes the paper.

\section{Problem Statement}
\label{sec:problem_stm}

    We consider an underactuated robot~\cite{spong}
    whose dynamics are described in control-affine form as
    \begin{equation}
        \dot{x}=f\!\left(x\right)+g\!\left(x\right)u,\label{eq:sys-dynamics}
    \end{equation}
    where the even-dimensional phase variable $x\equiv\left(q,\dot{q}\right)\in\mathbb{R}^{n}$
    is composed of the generalized coordinates $q\in\mathbb{R}^{\frac{n}{2}}$
    and generalized velocities $\dot{q}\equiv\frac{dq}{dt}$; $u\in\mathbb{R}$
    is the control input. The functions $f,\,g:\mathbb{R}^{n}\to\mathbb{R}^{n}$
    are continuously differentiable.
    When the robot dynamics are described by the Euler--Lagrange equations,
    the functions $f$ and $g$ take the form
    {\setlength{\arraycolsep}{2pt}\begin{align*}
        f\!\left(q,\dot{q}\right)  :=\left(\begin{array}{c}
        \dot{q}\\
        M^{-1}\!\left(q\right)N\left(q,\dot{q}\right)
        \end{array}\right), \;\;
        g\!\left(q\right)  :=\left(\begin{array}{c}
        0\\
        M^{-1}\!\left(q\right)F\!\left(q\right)
        \end{array}\right),
    \end{align*}}where $M\!\left(q\right)\in\mathbb{R}^{\frac{n}{2}\times\frac{n}{2}}$
    is a non-degenerate kinetic energy matrix, $N\!\left(q,\dot{q}\right)\in\mathbb{R}^{\frac{n}{2}}$
    describes Coriolis, centrifugal, and gravitational forces, and $F\left(q\right)\in\mathbb{R}^{\frac{n}{2}}$
    maps the control signal $u$ to the generalized force.
    
    Let $x=x_{\star}\left(t\right),$ $u=u_{\star}\left(t\right)$ be
    $T$-periodic functions satisfying~(\ref{eq:sys-dynamics}). The set
    of phase-space points through which the trajectory $x_{\star}$ passes is
    referred to as the \emph{orbit}~\cite{khalil}
    \[
        \mathrm{orb}\,x_{\star}:=\left\{ x_{\star}\left(t\right)\mid t\in\mathbb{R}\right\} \subset \mathbb{R}^n.
    \]
    Imposing the standard Euclidean structure on the phase space, we also
    introduce the distance operator between two points $a,b\in\mathbb{R}^{n}$
    as 
    \[
        \mathrm{dist}\left(a,b\right) := \left\Vert a - b \right\Vert,
    \]
    and the distance between a point $x$ and the orbit $\mathrm{orb}\,x_{\star}$
    as 
    \[
        \mathrm{dist}\left(x,\,\mathrm{orb}\,x_{\star}\right):=\min_{\nu\in\left[0,T\right]}\mathrm{dist}\left(x,x_{\star}\left(\nu\right)\right).
    \]
    We assume that the trajectory $x_{\star}$ does not have self-intersections; then the orbit is a topological set of dimension 1, and there exists a tubular neighborhood $U_{\varepsilon}(x_\star)$, which is defined as
    \[
        U_{\varepsilon}\!\left(x_{\star}\right):=\left\{ x\in\mathbb{R}^{n}\mid\mathrm{dist}\left(x,\,\mathrm{orb}\,x_{\star}\right)<\varepsilon\right\}.
    \]
    The standard definition of the orbital stability~\cite{khalil}
    means that for any arbitrarily small tube $U_{\varepsilon}(x_\star)$
    there exists a tube $U_{\delta}(x_\star)$, $\delta\leq\varepsilon$, such
    that any solution of the dynamical system originating in $U_{\delta}(x_\star)$
    does not leave $U_{\varepsilon}(x_\star)$. Asymptotic orbital stability additionally
    requires attractivity: any solution $x\left(t\right)$ converges asymptotically to the orbit, i.e., $\mathrm{dist}\left(x\left(t\right),\,\mathrm{orb}\,x_{\star}\right)\to0$
    as $t\to\infty.$
    
    In this work, we address practical orbital stability, which involves boundedness of solutions near the orbit and attractivity of a neighborhood $U_{\rho}\!\left(x_{\star}\right)$. While practical (asymptotic) stability of an equilibrium is a well-established notion in nonlinear control (see, e.g.,~\cite{Teel-1999,Chaillet-2008}), there is, to the best of our knowledge, no universally adopted definition of practical orbital stability. We therefore introduce the following definition.
    \begin{defn}
    \label{def:practical-orbital-stability} 
        We say that the trajectory
        $x_{\star}\left(t\right)$ of the equation
        \begin{equation}
            \dot{x}=\phi\left(x\right)\label{eq:ode-1}
        \end{equation}
        is \emph{practically (asymptotically) orbitally stable} if there exists $\rho > 0$, referred
        to as the \emph{practical accuracy}, such that the following conditions
        hold:
        \begin{enumerate}
            \item 
                For any $\varepsilon \geq \rho$, there exists $\delta = \delta\left(\varepsilon\right)>0$
                such that any solution of~(\ref{eq:ode-1}) with initial condition
                $x\left(t_0\right)\in U_{\delta}\!\left(x_{\star}\right)$ satisfies $x\left(t\right)\in U_{\varepsilon}\!\left(x_{\star}\right)$
                for all $t\geq t_0$;
            \item 
                Any solution of~(\ref{eq:ode-1}) eventually reaches the tube $U_{\rho}\!\left(x_{\star}\right)$
                and stays inside it, i.e., there exists $t_{\rho}\geq t_{0}$ such that
                $\mathrm{dist}\left(x\left(t\right),\,\mathrm{orb}\,x_{\star}\right)\leq \rho$ for all $t\geq t_{\rho}$.
        \end{enumerate}
    \end{defn}
    We consider practical orbital stability in a local sense: for initial conditions far from the orbit, the requirements may fail. Accordingly, we assume that the conditions of Definition~\ref{def:practical-orbital-stability} hold only for initial conditions in some subset of $\mathbb{R}^n$ containing the orbit.
    
    Based on this definition, we formulate the problem addressed in this paper. 
    \begin{prbm}
    \label{prob:feedback-design}
        Design a state feedback $u=u\left(q,\dot{q}\right)$ for the system~(\ref{eq:sys-dynamics}) that ensures practical orbital stability of the desired trajectory $x_{\star}(t)$.
    \end{prbm}

    For completeness of the exposition, we briefly recall the main steps of the transverse linearization approach, which will later serve as the basis for constructing an event-triggered impulsive control feedback for solving Problem~\ref{prob:feedback-design}.

\subsection{Robot Dynamics Linearization Near a Reference Orbit}
\label{sec:transverse-coordinates-def}

    Let 
    \begin{equation}
        x=\alpha\left(\tau,\xi\right)\label{eq:alpha}
    \end{equation}
    be a coordinate transformation, where $\tau\in\mathbb{R}$ is the
    \emph{projection variable} and $\xi\in\mathbb{R}^{n-1}$ are the \emph{transverse
    variables}. We assume the following:
    \begin{itemize}
        \item 
            The function $\alpha\in C^{2}$ is periodic in $\tau$: $\alpha\left(\tau+T,\xi\right)=\alpha\left(\tau,\xi\right),\forall\tau\in\mathbb{R},\,\xi\in\mathbb{R}^{n-1}.$
        \item 
            For all $\tau\in\mathbb{R}$ the map $\alpha$ satisfies $x_{\star}\left(\tau\right)=\alpha\left(\tau,0\right)$.
        \item 
            There exists $\varepsilon>0$ such that $\alpha$ induces a local
            diffeomorphism from $\left(\mathbb{R}/T\mathbb{Z}\right)\times\mathbf{B}_{\varepsilon}\left(0\right)$
            onto $U\left(x_{\star}\right)$. Here, $\mathbb{R}/T\mathbb{Z}$ is
            the quotient space defined as $\mathbb{R}$ equipped with the equivalence
            relation $\sim$: two elements $\forall a,b\in\mathbb{R}$ are equivalent,
            $a\sim b$, if and only if there exists $k\in\mathbb{Z}$ such that
            $a-b=Tk$.
    \end{itemize}
    Since the transformation~(\ref{eq:alpha}) is a diffeomorphism, there
    exists the inverse transformation
    \begin{equation}
    \label{eq:tran_coords_gen}
    \tau=\pi\left(x\right)\in\left[0,T\right),\quad\xi=\beta\left(x\right)\in\mathbb{R}^{n-1}.
    \end{equation}
    Using this transformation, we define the new control signal $w\in\mathbb{R}$
    as
    \begin{equation}
    \label{eq:fb_trans}
        w:=u-u_{\star}\left(\tau\right)=u-u_{\star}\left(\pi\left(x\right)\right).
    \end{equation}
    Then, the evolution of the coordinates $\tau$ and $\xi$ can be written
    as
    \begin{align}
        \left(\begin{array}{c}
        \dot{\tau}\\
        \dot{\xi}
        \end{array}\right) & =J^{-1}\!\left(\tau,\xi\right)
        \Big(f\left(x\right)+g\left(x\right)u_{\star}\left(\tau\right)+g\left(x\right)w\Big)_{x=\alpha(\tau,\xi)}\nonumber \\
         & =:\left(\begin{array}{c}
        1+f_{\tau}\left(\tau,\xi\right)\\
        f_{\xi}\left(\tau,\xi\right)
        \end{array}\right)+\left(\begin{array}{c}
        g_{\tau}\left(\tau,\xi\right)\\
        g_{\xi}\left(\tau,\xi\right)
        \end{array}\right)w,\label{eq:time-evolution-of-tau-xi}
    \end{align}
    where the Jacobian $J\left(\tau,\xi\right):=\frac{\partial\alpha\left(\tau,\xi\right)}{\partial\left(\tau,\xi\right)}$
    is non-degenerate on $\left(\tau,\xi\right)\in\mathbb{R}\times\mathbf{B}_{\varepsilon}\left(0\right)$,
    since $\alpha$ is a local diffeomorphism. The functions $f_{\tau},f_{\xi},$$g_{\tau},g_{\xi}$
    are continuously differentiable and periodic in $\tau$. The functions
    $f_{\tau},f_{\xi}$ satisfy $f_{\tau}\left(\tau,0\right)=0$, $f_{\xi}\left(\tau,0\right)=0$
    for all $\tau$, due to $x_{\star}\left(\tau\right)=\alpha\left(\tau,0\right)$.
    Changing the free variable in the dynamics~(\ref{eq:time-evolution-of-tau-xi}),
    $\frac{d}{dt}\mapsto\frac{d}{d\tau},$ yields the transverse dynamics
    in the form of the general nonlinear system~\cite{banaszuk1995,Surov-2026}:
    \begin{align}
        \frac{d\xi}{d\tau} & =\frac{f_{\xi}\left(\tau,\xi\right)+g_{\xi}\left(\tau,\xi\right)w}{1+f_{\tau}\left(\tau,\xi\right)+g_{\tau}\left(\tau,\xi\right)w}.\label{eq:transverse-dynamics}
    \end{align}
    As noted in~\cite{Leonov-1998}, a continuous feedback $w=w\left(\tau,\xi\right)$,
    that asymptotically stabilizes the trivial solution of dynamics~(\ref{eq:transverse-dynamics}), induces
    the control law
    \[
        u=u_{\star}\left(\tau\right)+w\left(\tau,\xi\right)
    \]
    for the system~(\ref{eq:sys-dynamics}), which guarantees asymptotic orbital stability of the trajectory $x_{\star}$ in some neighborhood $U_{\rho}\!\left(x_{\star}\right)$. Thus, the problem of orbital
    stabilization of $x_{\star}$ can be reduced to stabilization of the
    trivial solution of system~(\ref{eq:transverse-dynamics}). 
    
    A typical approach~\cite{Finet-2015,Surov-2020} for constructing the feedback $w\left(\tau,\xi\right)$
    is based on the analysis of the linearization of~(\ref{eq:transverse-dynamics}) near $\xi=0$ and $w=0$:
    \begin{align}
    \label{eq:lin-trans-dyn}
        \frac{d\xi}{d\tau}=A\left(\tau\right)\xi+B\left(\tau\right)w,
    \end{align}
    where $A\left(\tau\right) := \partial_{\xi}f_{\xi}\left(\tau,0\right)$, $B\left(\tau\right):=g_{\xi}\left(\tau,0\right)$ are periodic matrices. 
    As shown in~\cite[Theorem~1]{yakubovich1986}, controllability over one period of system~(\ref{eq:lin-trans-dyn})
    guarantees the existence of a feedback
    \[
        w\left(\tau,\xi\right)=K\left(\tau\right)\xi
    \]
    with $T$-periodic gains $K\left(\tau\right)\in\mathbb{R}^{1\times\left(n-1\right)}$.
    The numerical procedure for computing the gains is typically based
    on solving the corresponding LQR problem~\cite{yakubovich1986,gusev2016}. Below, we consider an alternative
    approach, based on event-triggered neuromorphic control.

\section{Trigger-event control for orbital stabilization}
    Influenced by the results in~\cite{heemels2026ejc},
    we demonstrate that the event-triggered neuromorphic control approach
    can also be applied to the problem of orbital stabilization. In particular,
    we consider the problem of stabilizing the transverse dynamics~(\ref{eq:transverse-dynamics})
    using a feedback in impulse form:
    \begin{align}
        u\left(\tau,\xi\right) &= u_{\star}\left(\tau\right)+w\left(\tau,\xi\right),\quad\text{with}\label{eq:impulse-control}\\
        w\left(\tau,\xi\right) &= a\sum_{k}\delta\left(\tau-\tau^{0}_{k}\right)-a\sum_{k}\delta\left(\tau-\tau^{1}_{k}\right),\nonumber 
    \end{align}
    where $\delta$ is the Dirac delta, $a>0$ is a constant representing
    the spike intensity, and $\tau^{0}_{k},\tau^{1}_{k}$ are state-dependent
    event times, which are computed in real time based on the presence
    of certain conditions $\phi_{0}$ and $\phi_{1}$: 
    \begin{align*}
        \tau^{i}_{k} & =\inf_{\tau>\tau_{k-1}}\left\{ \tau\mid\phi_{i}\left(\tau,\xi\right)\leq0\right\} ,\quad i\in\left\{ 0,1\right\} .
    \end{align*}
    Thus, the problem of feedback synthesis reduces to formulating the
    firing conditions $\phi_{i}$. While event-triggered control theory
    is well-developed for LTI systems \cite{petri2025cdc,petri2026nahs,heemels2026ejc,petri2025emulation}, these
    methods cannot be directly extended to the transverse dynamics~(\ref{eq:transverse-dynamics}) for two reasons. First, the linear approximation~(\ref{eq:lin-trans-dyn})
    is obtained by linearizing the closed-loop dynamics along the reference
    orbit, which requires the feedback $w=w\left(\tau,\xi\right)$ to
    be sufficiently smooth. For impulsive control, the closed-loop vector
    field is discontinuous at the event times, and the projection variable
    $\tau$ may jump abruptly,  so expressing the dynamics in terms of $\frac{d}{d\tau}$
    may become problematic. Second, even setting this issue aside and
    assuming the LTV system is valid, event-triggered control algorithms
    for periodic LTV systems are not readily available. These difficulties
    are addressed below.

\subsection{Hybrid transverse dynamics}
    
    To simplify the form of the transverse dynamics of the impulse-controlled
    system, we consider a projection map $\pi$ of a specific form, chosen
    such that the projection variable $\tau$ remains continuous during
    impulses. Specifically, we restrict $\pi$ to depend only on the generalized
    coordinates, $\tau=\pi\left(q\right)$, so that $\tau$ is unaffected
    by the velocity jump induced by an impulse. As shown below, this choice
    yields a well-defined two-phase evolution of the transverse variables.
    \begin{claim}
    \label{claim:hybrid-dynamics} 
        Consider system~(\ref{eq:sys-dynamics}),
        the transverse coordinate transformation~\eqref{eq:tran_coords_gen} given by
        \begin{equation*}
        \tau=\pi\left(q\right),\quad\xi=\beta\left(q,\dot{q}\right),
        \end{equation*}
        and the control law in the form
        \begin{equation}
        \label{eq:sample-control}
            u=u_{\star}\left(\tau\right)+a\delta\left(\tau-\tau_{0}\right),
        \end{equation}
        where $\tau_{0}$ is a fixed constant and $a\ne0$ is spike intensity.
        Then, the evolution of the coordinates $\tau,\xi$ along solutions
        $x\left(t\right)\in U_{\varepsilon}\!\left(x_{\star}\right)$ 
        of~(\ref{eq:sys-dynamics},\ref{eq:sample-control}) is divided into two phases:
        \begin{enumerate}
            \item The \emph{flow phase}, when $\tau\ne\tau_{0}$. The evolution is given
                by the differential equations
                \[
                    \frac{d\tau}{dt}=1+f_{\tau}\left(\tau,\xi\right),\quad\frac{d\xi}{dt}=f_{\xi}\left(\tau,\xi\right).
                \]
            \item The \emph{jump phase}, when $\tau$ reaches $\tau_{0}$. Let this
                phase occur at $t=t_{0}$. Then, the jump map of the projection variable
                is the identity:
                \begin{equation}
                    \tau\left(t^{+}_{0}\right)=\tau\left(t_{0}\right).\label{eq:tau-jump-map}
                \end{equation}
                The jump map of the transverse coordinates is
                \begin{equation}
                \label{eq:xi-jump-map}
                    \xi\left(t^{+}_{0}\right)=\xi\left(t_{0}\right)+a\frac{g_{\xi}\left(\tau_{0},\xi\left(t_{0}\right)\right)}{1+f_{\tau}\left(\tau_{0},\xi\left(t_{0}\right)\right)}.
                \end{equation}
        \end{enumerate}
    \end{claim}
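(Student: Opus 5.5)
The plan is to treat the two phases separately. For the jump phase I would replace the Dirac impulse by a regularized pulse and pass to the limit. I also expect that the formula \eqref{eq:xi-jump-map}, with the denominator frozen at the pre-jump state, is exact only under an additional convention or structural assumption, or else holds to first order; this caveat is explained below.

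\emph{Flow phase.} For $\tau\neq\tau_0$ the impulsive term in \eqref{eq:sample-control} vanishes, so $u=u_\star(\tau)$ and $w=0$. Substituting into \eqref{eq:time-evolution-of-tau-xi} gives $\dot\tau=1+f_\tau(\tau,\xi)$ and $\dot\xi=f_\xi(\tau,\xi)$ directly.

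\emph{Structural fact.} The key observation for the jump phase is that $g_\tau\equiv 0$ when $\tau=\pi(q)$. Indeed, $\dot\tau=\pi'(q)\dot q$ contains no acceleration, and the first block of $g$ is zero. Equivalently, the first row of $J^{-1}g$ vanishes because $\partial\pi/\partial\dot q=0$.

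\emph{Jump phase, continuity of $\tau$.} I would replace $\delta(\tau-\tau_0)$ by a smooth bump $\varphi_h(\tau-\tau_0)$ of unit mass and support $[-h,h]$, solve, and let $h\to0$.
\begin{itemize}
\item Since $\dot q$ stays bounded (its increment is bounded by $|a|\max|M^{-1}F|/|\dot\tau|$), $q$ is continuous in the limit.
\item Hence $\tau=\pi(q)$ is continuous, which gives \eqref{eq:tau-jump-map}.
\end{itemize}

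\emph{Jump phase, impulse in time.} To convert the pulse in $\tau$ into a pulse in time, I would use $\varphi_h(\tau-\tau_0)\,dt=\varphi_h(\tau-\tau_0)\,d\tau/\dot\tau$.
\begin{itemize}
\item Since $\dot\tau=1+f_\tau$ is bounded away from zero in $U_\varepsilon(x_\star)$, the crossing time $t_0$ is well defined and the pulse lasts $O(h)$ in time.
\item Integrating $\ddot q=M^{-1}(N+Fu)$ over this window gives $\Delta x=g(x)\int u\,dt$. Here $N$ and $u_\star$ contribute $O(h)$, and $q$ is frozen.
\item Freezing the denominator at the pre-jump value gives $\int u\,dt\to a/(1+f_\tau(\tau_0,\xi(t_0)))$.
\item Mapping through $\beta$, and using $J^{-1}g=(0,g_\xi)^\top$, yields \eqref{eq:xi-jump-map}. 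This step is exact if $\beta$ is affine in $\dot q$ for fixed $q$, which is typical for transverse coordinates built from $(q,\dot q)$, and otherwise holds up to $O(a^2)$ terms.
\end{itemize}

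\emph{Main obstacle.} The hardest step is justifying the freezing of $\dot\tau$. During the pulse, $\dot\tau=\pi'(q)\dot q$ itself changes, because the velocity jumps. The composition $\delta(\tau(t)-\tau_0)$ is therefore ambiguous: the exact regularized limit solves $d\dot q/d\tau=a\varphi_h M^{-1}F/(\pi'\dot q)$, which is nonlinear in the jump size.

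I would first try to show that the claim's formula corresponds to interpreting the impulse with the pre-jump rate. That is, one reads $a\delta(\tau-\tau_0)$ as $\frac{a}{\dot\tau(t_0)}\delta(t-t_0)$, which is the natural convention for event-triggered spikes emitted at the detected crossing time. Under this convention the jump is exactly $x\mapsto x+\frac{a}{1+f_\tau}g(x)$, with $q$ fixed.

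If instead the regularized-$\tau$ interpretation is insisted upon, I would solve the scalar ODE for $\pi'\dot q$ along the pulse. This shows that the jump agrees with \eqref{eq:xi-jump-map} up to $O(a^2)$, which suffices for the subsequent linearized (LTV) analysis.
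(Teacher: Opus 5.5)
Your proposal follows the same route as the paper's proof: the flow phase from $w=0$, continuity of $\tau$ from $g_\tau=\frac{\partial\pi}{\partial\dot q}M^{-1}F=0$, and the jump of $\xi$ from the composition rule $\int f(t)\,\delta(\tau(t)-\tau_0)\,dt=f(t_0)/\dot\tau(t_0)$. Your caveat is also correct and covers a point the paper's proof glosses over, since that rule is applied to $\dot\tau$ and $g_\xi$, which themselves jump at $t_0$: the paper implicitly uses your pre-jump convention $a\,\delta(\tau-\tau_0)\equiv\frac{a}{\dot\tau(t_0^-)}\,\delta(t-t_0)$, and it is exact only with a $\beta$ affine in $\dot q$ (true for the butterfly coordinates), whereas the regularized-in-$\tau$ limit gives $\Delta\dot q=\frac{2a}{s_-+s_+}M^{-1}F$ with $s_\pm:=\dot\tau(t_0^\pm)$, $s_+^2=s_-^2+2a\,\frac{\partial\pi}{\partial q}M^{-1}F$, which agrees with \eqref{eq:xi-jump-map} only to first order in $a$ unless $\frac{\partial\pi}{\partial q}M^{-1}F=0$.
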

    \begin{proof}
        The flow phase is obtained from~(\ref{eq:transverse-dynamics}) by
        substituting $w=0$. To compute the jump map of $\tau$, we note that
        $\pi$ does not depend on $\dot{q}$, which implies 
        \[
            g_{\tau}\left(\tau,\xi\right)=\frac{\partial\pi}{\partial\dot{q}}M^{-1}\!\left(q\right)F\!\left(q\right)=0.
        \]
        After integrating the resulting expression for $\dot{\tau}$, we obtain~(\ref{eq:tau-jump-map}).
        This result implies that the variable $\tau$ changes continuously
        with time and is strictly increasing (in a small enough neighborhood of the orbit, due to the properties of the
        transverse coordinates). Consequently, the jump map~(\ref{eq:xi-jump-map}) follows
        from direct integration of $\dot{\xi}$ over $\left[t_{0}-\epsilon,t_{0}+\epsilon\right]$
        and the composition rule for the Dirac delta
        \[
            \lim_{\epsilon\to0}\int^{t_{0}+\epsilon}_{t_{0}-\epsilon}f\left(t\right)\delta\left(\tau\left(t\right)-\tau_{0}\right)dt=\frac{f\left(t_{0}\right)}{\dot{\tau}\left(t_{0}\right)}.
            \vspace{-14pt}
        \]
        
    \end{proof}
    Claim~\ref{claim:hybrid-dynamics} allows us to formulate the hybrid dynamics of system~(\ref{eq:sys-dynamics})
    with impulse control~(\ref{eq:impulse-control}). Noting that, for a sufficiently small $U_\rho(x_\star)$ the projection variable is strictly increasing, we can write the transverse dynamics in the form
    \begin{align}
        \frac{d\xi}{d\tau} & =A\left(\tau\right)\xi+r_{1}\left(\tau,\xi\right),\quad\text{when}\quad\tau\notin\left\{ \tau^{0}_{k},\tau^{1}_{k}\right\}, \label{eq:hybrid-transverse-dynamics}\\
        \xi\left(\tau^{+}\right) & =\xi\left(\tau\right)+aB\left(\tau\right)+ar_{2}\left(\tau,\xi\right),\quad\text{when}\quad\tau=\tau^{0}_{k},\nonumber \\
        \xi\left(\tau^{+}\right) & =\xi\left(\tau\right)-aB\left(\tau\right)+ar_{2}\left(\tau,\xi\right),\quad\text{when}\quad\tau=\tau^{1}_{k},\nonumber 
    \end{align}
    where $r_{1}$ collects quadratic and higher-order terms in $\xi$,
    and $r_{2}$ contains linear and higher-order terms in $\xi$. The
    algorithm for stabilizing this hybrid system, i.e., computing the
    values $\tau^{0}_{i},\tau^{1}_{i}$ is based on the  \emph{Lyapunov-based
    greedy approach}~\cite{heemels2026ejc} and is presented below.

\subsection{Lyapunov-based greedy approach for periodic LTV systems}

    To construct the control law, we employ a spike-generation strategy
    based on Lyapunov function greedy approach. This involves constructing
    a quadratic Lyapunov function of the form $V\left(\tau,\xi\right)=\xi^{\top}P\left(\tau\right)\xi$ with positive definite matrix $P(\tau)$.
    The firing conditions $\phi_{i}$ are formulated such that a control
    spike instantaneously decreases the value of $V$. The following theorem
    establishes practical stability of the resulting event-triggered control
    system.
    \begin{thm}
        \label{thm:ltv_stab}
        Consider the hybrid system
        \begin{align}
            \frac{d\xi}{d\tau} & =A\left(\tau\right)\xi,\quad\text{when}\quad\tau\notin\left\{ \tau^{0}_{k},\tau^{1}_{k}\right\}, \label{eq:hybrid-ltv-system}\\
            \xi\left(\tau^{+}\right) & =\xi\left(\tau\right)+aB\left(\tau\right),\quad\text{when}\quad\tau=\tau^{0}_{k},\nonumber \\
            \xi\left(\tau^{+}\right) & =\xi\left(\tau\right)-aB\left(\tau\right),\quad\text{when}\quad\tau=\tau^{1}_{k},\nonumber 
        \end{align}
        where $A\left(\tau\right)$ and $B\left(\tau\right)$ are continuous
        $T$-periodic matrix functions and $a>0$ is the spike intensity.
        Let $P\left(\tau\right)$ be the stabilizing periodic solution\footnote{$P(\tau)$ is a positive definite, symmetric, $T$-periodic matrix function.} to
        the matrix Riccati differential\footnote{For
        brevity, the arguments $\tau$ of the matrix functions $A,B,P$ are omitted.} equation
        \begin{equation}
        \label{eq:ricc}
            \frac{d P}{d\tau}=-A^{\top}P-PA+\frac{1}{r}PBB^{\top}P-Q,
        \end{equation}
        where $Q=Q^{\top}$, $Q>0$ and $r>0$ are design parameters. Then,
        the trivial solution $\xi=0$ of the system with trigger event conditions
        \begin{align}
            \tau^{0}_{k} & =\inf_{\tau>\tau^{0}_{k-1}}\!\left\{ \tau\mid\phi_{0}\left(\tau,\xi\right)\leq0\right\}, \label{eq:trigger-event-feedback}\\
            \tau^{1}_{k} & =\inf_{\tau>\tau^{1}_{k-1}}\!\left\{ \tau\mid\phi_{1}\left(\tau,\xi\right)\leq0\right\} ,\nonumber 
        \end{align}
        where
        \[
            \phi_{i}\left(\tau,\xi\right) := \left(-1\right)^{i}2a\,\xi^{\top}P\!\left(\tau\right)B\!\left(\tau\right)+a^{2}B^{\top}\!\left(\tau\right)P\!\left(\tau\right)B\!\left(\tau\right),
        \]
        is practically stable~\cite{Chaillet-2008}.
    \end{thm}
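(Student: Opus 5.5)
We propose a Lyapunov argument with $V(\tau,\xi)=\xi^{\top}P(\tau)\xi$, splitting the analysis into the behaviour of $V$ across jumps and along flows, in the spirit of the LTI greedy analysis of~\cite{heemels2026ejc}. The construction has three steps, followed by the estimates for practical stability.

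\textbf{Step 1 (jumps decrease $V$).} A direct expansion gives
\[
V(\tau,\xi\pm aB)-V(\tau,\xi)=\pm2a\,\xi^{\top}PB+a^{2}B^{\top}PB=\phi_{i}(\tau,\xi),
\]
with $i=0$ for the $+$ sign and $i=1$ for the $-$ sign. Hence a spike fires exactly when it does not increase $V$; with the infimum in~\eqref{eq:trigger-event-feedback} it fires at the boundary $\phi_i=0$.

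We must also rule out accumulation of spikes (Zeno behaviour). After a $+$ spike, $\xi^{\top}PB$ increases by $a B^{\top}PB>0$, so $\phi_0$ becomes $a^2B^{\top}PB>0$ and $\phi_1$ becomes $-4a\,\xi^{\top}PB$-type positive quantities; the next firing therefore requires flow time. Because $A$, $B$, $P$ are continuous and $\xi$ is bounded on compact sets, this gives a uniform positive dwell time. This step relies on $B^{\top}PB$ being bounded below, which we assume nonzero; if needed, one could also require a strict decrease to obtain a margin.

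\textbf{Step 2 (flow estimate).} Along the flow $\xi'=A\xi$, the Riccati equation~\eqref{eq:ricc} gives
\[
\dot V=\xi^{\top}\bigl(\dot P+A^{\top}P+PA\bigr)\xi=-\xi^{\top}Q\xi+\tfrac{1}{r}(B^{\top}P\xi)^{2}.
\]
Between spikes both firing conditions are positive, i.e.\ $-a^{2}B^{\top}PB/2<a\,\xi^{\top}PB<a^{2}B^{\top}PB/2$, so $|B^{\top}P\xi|\le \tfrac{a}{2}B^{\top}PB\le c\,a$ uniformly by periodicity. Therefore
\[
\dot V\le -\lambda_{\min}(Q)|\xi|^{2}+\frac{c^{2}a^{2}}{r}\le -\gamma V+\kappa a^{2},
\]
with $\gamma=\min_\tau\lambda_{\min}(Q)/\lambda_{\max}(P)$.

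\textbf{Step 3 (practical stability).} Combining the nonincreasing jumps with the flow inequality, a comparison lemma applied piecewise yields
\[
V(\tau)\le e^{-\gamma(\tau-\tau_0)}V(\tau_0)+\frac{\kappa a^{2}}{\gamma}.
\]
By the uniform bounds $p_1I\le P\le p_2I$, $\xi$ ultimately enters a ball of radius $O(a)$ and stays bounded by $\sqrt{p_2/p_1}\,|\xi_0|+O(a)$. This is exactly practical stability in the sense of~\cite{Chaillet-2008}, with the ultimate bound shrinking as $a\to0$.

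\textbf{Main obstacle.} The delicate point is that the infimum trigger fires exactly at $\phi_i=0$. We must ensure that the strict invariant ``$\phi_i>0$ between spikes'' is maintained and that the two trigger families cannot fire simultaneously or chatter. We would handle this via the dwell-time argument above; if necessary we would analyse the hybrid solution concept, allowing a jump exactly on the boundary and showing that the post-jump state lies strictly inside the region $\{\phi_0>0,\phi_1>0\}$.
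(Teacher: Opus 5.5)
Your Steps 1--3 follow the paper's proof essentially line by line. Both use the jump identity $V(\tau^+)-V(\tau)=\phi_i\le 0$ and insert the flow-phase bound $|\xi^{\top}PB|\le\frac{a}{2}B^{\top}PB$ into $\frac{dV}{d\tau}=\frac{1}{r}(B^{\top}P\xi)^2-\xi^{\top}Q\xi$. Both then use $Q\ge kP$ with $k=\lambda_{\min}(Q)/p_{\max}$ and a comparison argument across nonincreasing jumps. Your ultimate radius $O(a)$, obtained from the square root of the ultimate bound on $V$, is stated more carefully than the paper's last line, which compares the radius with $C/(kp_{\min})$ instead of its square root. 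The part you add beyond the paper, the dwell-time argument against Zeno behaviour, is incorrect.

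Write $s=\xi^{\top}PB$ and $c=B^{\top}PB>0$. A $+$ spike fired on the boundary $\phi_0=0$ has $s=-\frac{ac}{2}$, and after the jump $s^+=s+ac=\frac{ac}{2}$. Hence $\phi_0^+=2a^2c$ (not $a^2c$), and $\phi_1^+=-2as^++a^2c=0$, not a positive quantity. The strip $\{|s|\le\frac{ac}{2}\}$ has width exactly $ac$, which is the size of one jump in $s$. So a spike fired on one edge always lands exactly on the opposite edge, with $V$ unchanged.

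With the non-strict trigger $\phi_1\le 0$, the opposite spike is enabled at that very instant and would undo the $+$ spike, returning the state to $\phi_0=0$. Continuity of $A$, $B$, $P$ and boundedness of $\xi$ do nothing to prevent this ping-pong. So no uniform dwell time follows, and the repair in your last paragraph (post-jump state strictly inside $\{\phi_0>0,\phi_1>0\}$) is impossible. Likewise, from initial states with $s\le-\frac{3ac}{2}$, one spike does not restore $\phi_0>0$, so the next firing need not wait for any flow time.

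The paper's proof does not address this well-posedness issue at all. It simply takes $\phi_0,\phi_1>0$ on flow intervals, and the unbounded growth of $\tau$, for granted. You have two options:
\begin{itemize}
\item Drop the dwell-time claim. Your argument then stands at exactly the paper's level of rigour.
\item Close the gap by changing the controller, e.g.\ firing only on a strict decrease $\phi_i\le-\sigma<0$. Every spike then lowers $V$ by at least $\sigma$, while the flow raises $V$ at a bounded rate. So only finitely many spikes occur on any bounded $\tau$-interval, with no lower bound on $B^{\top}PB$ needed. The cost is that the flow-phase bound enlarges to $|s|<\frac{ac}{2}+\frac{\sigma}{2a}$.
\end{itemize}
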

    \begin{proof}
    \label{proof:stability-theorem} 
        The proof is based on the analysis
        of the evolution of the Lyapunov function $V\left(\tau,\xi\right)=\xi^{\top}P\left(\tau\right)\xi$
        as $\tau$ grows. During the flow phase, both firing conditions satisfy
        $\phi_{0},\phi_{1}>0$, which yields
        \[
            \left|\xi^{\top}PB\right|<\frac{a}{2}B^{\top}PB.
        \]
        The rate of $V$ is therefore 
        \[
            \frac{dV}{d\tau}=\frac{1}{r}\xi^{\top}PBB^{\top}P\xi-\xi^{\top}Q\xi\leq\frac{a^{2}}{4r}\left(B^{\top}PB\right)^{2}-\xi^{\top}Q\xi.
        \]
        Since $P\left(\tau\right)$ is positive definite, continuous, and
        periodic, there exist $p_{\min},p_{\max}>0$, such that 
        \begin{equation}
            p_{\min}\mathbf{I}\leq P\left(\tau\right)\leq p_{\max}\mathbf{I}.\label{eq:P-inequality}
        \end{equation}
        Hence, with $k:=\frac{\lambda_{\min}\left(Q\right)}{p_{\max}}>0$
        we have $Q\geq kP\left(\tau\right)$ for all $\tau$, and consequently
        \begin{equation}
            \frac{dV}{d\tau}\leq\frac{a^{2}}{4r}\left(B^{\top}PB\right)^{2}-kV.\label{eq:lyap-fun-deriv}
        \end{equation}
        The first term on the right-hand side is a continuous periodic function
        of $\tau$ and is therefore bounded. Defining
        \[
            C:=\frac{a^{2}}{4r}\max_{\tau\in\mathbb{R}}\left(B^{\top}PB\right)^{2}\in\left(0,\infty\right),
        \]
        the Gr{\"o}nwall--Bellman inequality gives
        \[
            V\left(\tau\right)\leq V\left(\tau_{0}\right)e^{-k\left(\tau-\tau_{0}\right)}+\frac{C}{k}.
        \]
        
        At a spike, the jump in $V$ is
        \[
            V\left(\tau^{+}\right)=V\left(\tau\right)\pm2a\xi^{\top}PB+a^{2}B^{\top}PB,
        \]
        where the sign is determined by the corresponding firing condition~(\ref{eq:trigger-event-feedback}).
        Rearranging the terms, we have
        \begin{align*}
            V\left(\tau^{+}\right)-V\left(\tau\right) & =\phi_{i}\left(\tau,\xi\right)\leq0.
        \end{align*}
        Thus, $V$ does not increase at spike events. It follows that $V$ converges exponentially to the interval $\left[0,\frac{C}{k}\right]$.
        In view of~(\ref{eq:P-inequality}), for any $r>\frac{C}{kp_{\min}}$,
        the state variable $\xi$ enters the ball $\mathbf{B}_{r}\left(0\right)$
        in finite time and remains there. Hence, the trivial solution is practically
        stable.
    \end{proof}

    \begin{rem}
        Practical stability of the closed-loop linearized system~(\ref{eq:hybrid-ltv-system}) provides a basis for expecting practical stability of the corresponding nonlinear hybrid transverse dynamics~(\ref{eq:hybrid-transverse-dynamics}) in some neighborhood of the orbit.
        A rigorous analysis of the nonlinear hybrid system and its robustness
        to disturbances and unmodeled dynamics is beyond the scope of this
        conference paper and is left for future work.
    \end{rem}
    \begin{rem}
        Practical stability of the trivial solution of the transverse dynamics~(\ref{eq:hybrid-transverse-dynamics})
        implies practical orbital stability of the trajectory $x_{\star}$.
        Indeed, by the definition of practical stability, the transverse coordinates
        $\xi$ converge in finite time to a ball $\mathbf{B}_{r}\left(0\right)$
        and remain within this ball thereafter. Assuming typical continuity requirements for $\alpha$, there exists $\rho = \rho\left(r\right)>0$ such that
        \[
            \xi = \beta(x) \in\mathbf{B}_{r}\left(0\right) \quad \implies \quad
            \mathrm{dist}\left(x,\,\mathrm{orb}\,x_{\star}\right) < \rho.
        \]
        Consequently, the tubular neighborhood $U_{\rho}\!\left(x_{\star}\right)$
        is reached in finite time and remains within this tube thereafter, which
        establishes practical orbital stability of $x_{\star}$.
    \end{rem}

    The theoretical development above establishes the practical orbital stability of the closed-loop system under the proposed controller~(\ref{eq:impulse-control}). We now demonstrate the practical applicability of the approach through experiments on a physical robot.

\section{EXPERIMENTAL RESULTS}
    The proposed control law is evaluated on the Butterfly robot~\cite{Lynch-1998},
    a well-known benchmark for motion control.
    The robot represents a planar mechanical system: a figure-eight-shaped link is mounted to a fixed base through a single revolute joint and rotates in the vertical plane under a DC motor, while a small spherical ball rolls passively along its surface. The link's angular position is measured by an encoder, and the ball's position is estimated in real time from a vision system.

    Assuming ideal rolling, perfect spherical symmetry of the ball, and rotational symmetry of the mechanism, the configuration space is topologically a torus. It is parameterized by two angles: $\vartheta$, the orientation of the link relative to the base, and $\varphi$, the ball's position in a polar frame attached to the link (Fig.~\ref{fig:butterfly-gen-coords}). 
    The DC motor generates the control torque $u$ at the link's revolute joint axis.

    The robot is an underactuated mechanical system with two degrees of freedom and one control input. Its dynamics are described by the Euler–Lagrange equations~\cite{Surov-2026}, which can be represented in the control-affine form~\eqref{eq:sys-dynamics} with state $x\equiv (\vartheta,\varphi,\dot{\vartheta},\dot{\varphi})^{\top}$ and scalar control input $u$.
    Exact expressions for the functions $f$ and $g$ are lengthy and omitted here for brevity; they can be found in~\cite{sumenkov2026}, and the physical parameters are available at~{\small\url{https://gitlab.com/-/snippets/4807338}.}

    Following the algorithm for computing feasible periodic trajectories for the Butterfly robot~\cite{Surov-2026}, we obtained the trajectory illustrated in Fig.~\ref{fig:ref_traj}. Due to rotational symmetry, the coordinates $\vartheta, \varphi$  advance by the angle $\pi$ over one period $T=2.38$ sec, and the system returns to an identical configuration.

    \begin{figure}[!h]
        \centering
        \includegraphics[width=0.98\linewidth]{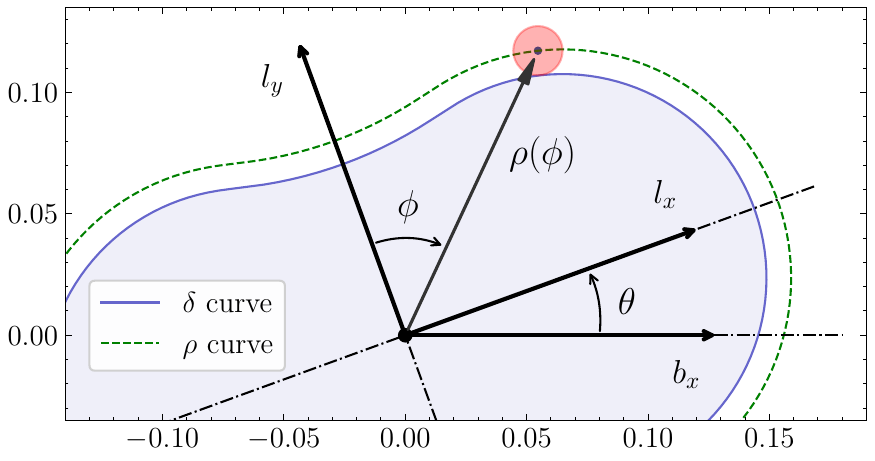}
        \caption{
            Schematic of the Butterfly robot. Generalized coordinates $q \equiv (\vartheta, \varphi)$, where $\vartheta$ is the link orientation and $\varphi$  is the angular position of the ball relative to the link; $\delta$ is the curve defining the shape of the link surface, and $\rho$ is equidistant to it.
        }
        \label{fig:butterfly-gen-coords}
    \end{figure}

    \begin{figure}[!h]
        \centering
        \includegraphics[width=0.99\linewidth]{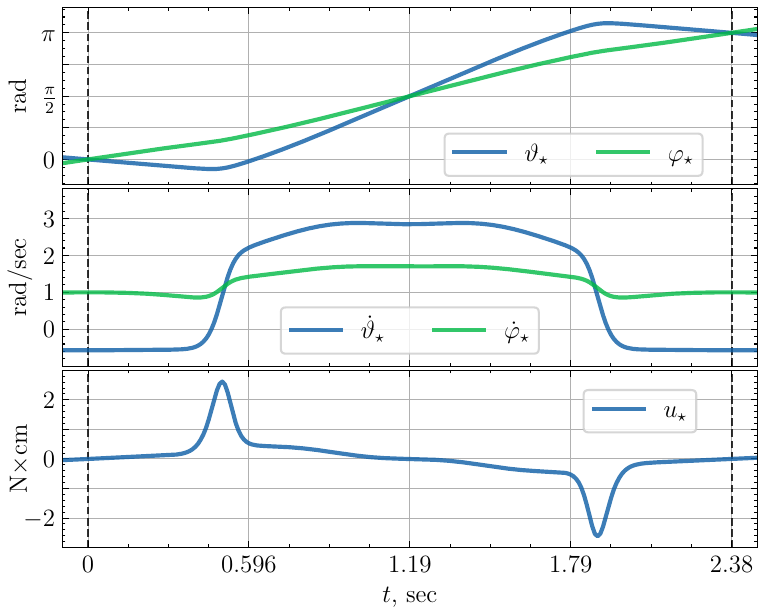}
        \caption{
            Components of the desired solution $x_\star(t)$ with the corresponding nominal control $u_\star(t)$.
        }
        \label{fig:ref_traj}
    \end{figure}
    
\subsection{Periodic LTV system: Transverse linearization}
    
    As seen in Fig.~\ref{fig:ref_traj}, the function $\varphi_\star(t)$ is strictly increasing. Therefore, the projection map $\pi$ can be chosen as
    \begin{equation}
    \label{eq:proj_var}
        \tau = \pi(x) := \varphi_\star^{-1}(\varphi).
    \end{equation}
    The transverse coordinates $\xi = \beta(x) \in \mathbb{R}^3$ are selected as
    \begin{equation}
    \label{eq:tran_coord_butt}
        \xi_1 := \vartheta - \vartheta_\star(\tau), \; \xi_2 := \dot \vartheta - \dot \vartheta_\star(\tau),\; \xi_3 := \dot\varphi - \dot\varphi_\star(\tau),
    \end{equation}
    where $\dot\varphi_\star(\tau) := d\varphi_\star(\tau) / d\tau$, $\dot\vartheta_\star(\tau) := d\vartheta_\star(\tau) / d\tau$. It is straightforward to verify that the inverse transformation $x = \alpha(\tau,\xi)$ exists:
    \begin{align*}
        &\, \vartheta =\xi_{1}+\vartheta_{\star}\left(\tau\right),\quad \varphi=\varphi_{\star}\left(\tau\right), \\
        &\, \dot{\vartheta}=\xi_{2}+\dot{\vartheta}_{\star}\left(\tau\right),\quad \dot{\varphi}=\xi_{3}+\dot{\varphi}_{\star}\left(\tau\right),
    \end{align*}
    and satisfies the conditions given in Section~\ref{sec:transverse-coordinates-def}. Then, following Claim~\ref{claim:hybrid-dynamics}, the robot dynamics can be represented in transverse coordinates in the form~\eqref{eq:hybrid-transverse-dynamics}.

\subsection{Neuromorphic Controller Design}

    The design of the neuromorphic controller for the linearized hybrid transverse dynamics is based on constructing the Lyapunov function $V(\tau, \xi) = \xi^\top P(\tau) \xi$, where $P(\tau)$ is the stabilizing solution to the matrix Riccati differential equation~\eqref{eq:ricc}. We selected the weighting matrix $Q = \mathrm{diag}\{20,\,1,\, 1\}$ and coefficient $R = 2\cdot10^{4}$. Equation~\eqref{eq:ricc} was solved numerically using the LMI approach outlined in~\cite{gusev2016}. The solution $P(\tau)$ along with the functions $B(\tau)$ and $u_\star(\tau)$ was interpolated with a B-spline for real-time computation in the control loop. These functions are available at~{\small\url{https://gitlab.com/-/snippets/6055567}.}

    We conducted a series of experiments on the physical robot, selecting different values of the spike amplitude $a \in \left\{5, 6, 10, 12, 20\right\} \times 10^{-3}$, and for all of these values the algorithm demonstrated good tracking quality. In Figs.~\ref{fig:exp_tran_coords} and~\ref{fig:exp_gen_coords}, we present the experimental results for $a=0.02$. As seen, the transverse variables $\xi$ converge to a neighborhood of the origin, demonstrating practical stability. The corresponding phase trajectory $x(t) \equiv (\vartheta(t), \varphi(t), \dot\vartheta(t), \dot\varphi(t))$ converges to a neighborhood of the reference orbit, thereby demonstrating practical orbital stability.
    In Fig.~\ref{fig:exp_tran_coords}, we also show the control signal $w$ and its filtered value with a cutoff frequency of $20$ Hz. The frequent spike triggering is explained by the presence of disturbances (such as unmodeled contact forces and non-ideal motor dynamics), noise (in the computer vision system), and delays in measurements. These results demonstrate the practical feasibility of the proposed control algorithm and its robustness.

    A video of the experiment with the spiking neuromorphic controller is provided in the multimedia attachment.

    \begin{figure}[!h]
        \centering
        \includegraphics[width=0.99\linewidth]{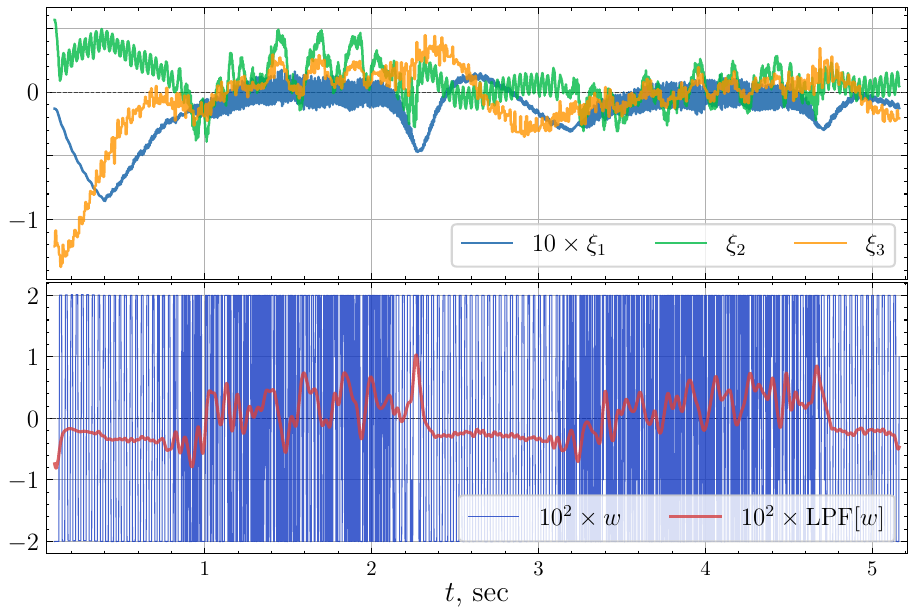}
        \caption{
            Transients of the transverse coordinates $\xi$. Control signal $w$ (blue) and its low-pass-filtered counterpart (red).
        }
        \label{fig:exp_tran_coords}
    \end{figure}

    \begin{figure}[!h]
        \centering
        \includegraphics[width=0.99\linewidth]{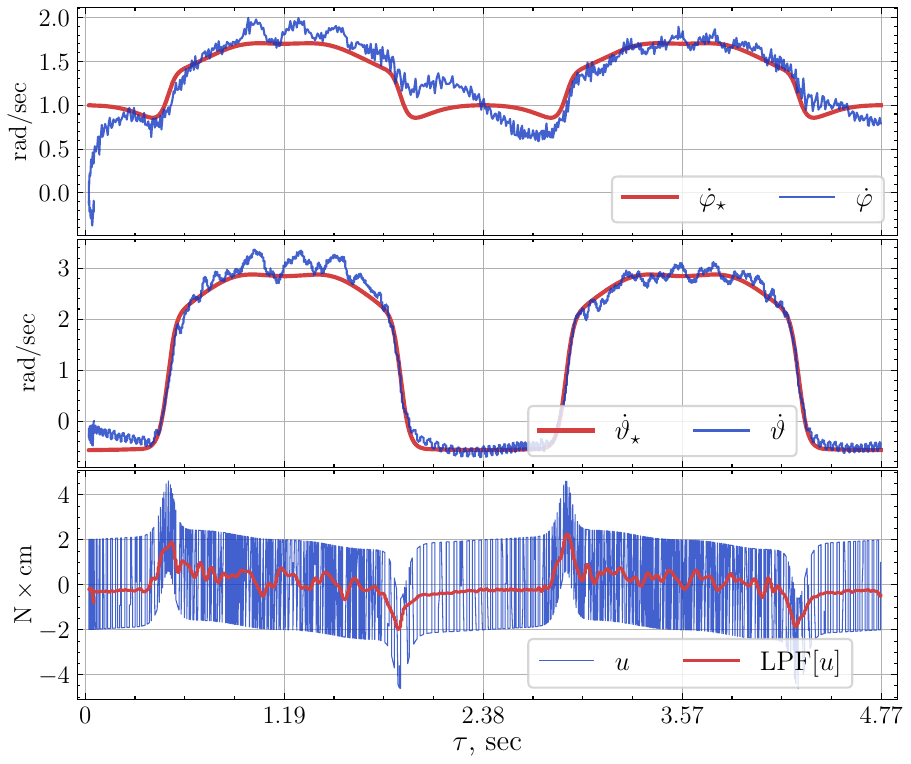}
        \caption{
            Reference signals $\vartheta_\star$, $\varphi_\star$ (red) and measured signals $\vartheta$, $\varphi$ (blue). Control input $u$ (blue) and its low-pass-filtered counterpart (red).
        }
        \label{fig:exp_gen_coords}
    \end{figure}

\section{CONCLUSIONS}

    This paper has provided a theoretical justification that neuromorphic control can solve the problem of periodic orbit tracking. In particular, we have shown that dynamics of an underactuated robot under event-based spiking control can be represented as a linear hybrid system with periodic coefficients. This simplification allowed us to formulate the firing conditions and establish practical orbital stability of the reference trajectory.
    
    We have also demonstrated that the proposed controller design approach is not merely of theoretical interest. Experiments on the physical robot confirm that the resulting controller can be implemented in practice, computed in real time, and remains effective in the presence of disturbances, measurement noise, and delays --- none of which are accounted for in the analysis. Together, the theoretical and experimental results establish both the feasibility and the practical applicability of the proposed neuromorphic control approach.

    The presented results can be readily extended to systems with $m>1$ control signals. A complete theoretical proof of stability and robustness of the nonlinear system, as well as the investigation of different firing strategies, is left for future work.

\addtolength{\textheight}{-16cm}   % This command serves to balance the column lengths
                                  % on the last page of the document manually. It shortens
                                  % the textheight of the last page by a suitable amount.
                                  % This command does not take effect until the next page
                                  % so it should come on the page before the last. Make
                                  % sure that you do not shorten the textheight too much.

%%%%%%%%%%%%%%%%%%%%%%%%%%%%%%%%%%%%%%%%%%%%%%%%%%%%%%%%%%%%%%%%%%%%%%%%%%%%%%%%

\bibliographystyle{IEEEtran}
\bibliography{IEEEabrv,biblio}

\end{document}